\newtheorem{theorem}{Theorem}[section]
\newtheorem{lemma}[theorem]{Lemma}
\newtheorem{corollary}[theorem]{Corollary}
\theoremstyle{definition}
\theoremstyle{remark}
\numberwithin{equation}{section}
\begin{document}

\bibliographystyle{plain}

\title{Convergence Rate of Krasulina Estimator}

\author{Jiangning Chen}
\address{School of Mathematics, Georgia Institute of Technology, Altanta, GA 30313}

\email{jchen444@math.gatech.edu}


\date{Nov.14 2017}

\keywords{PCA, incremental, online updating, covariance matrix, eigenvector, rate of convergence, adaptive estimation}

\begin{abstract}

Principal component analysis (PCA) is one of the most commonly used statistical procedures with a wide range of applications. Consider the points $X_1, X_2,..., X_n$ are vectors drawn i.i.d. from a distribution with mean zero and covariance $\Sigma$, where $\Sigma$ is unknown. Let $A_n = X_nX_n^T$, then $E[A_n] = \Sigma$. This paper considers the problem of finding the smallest eigenvalue and eigenvector of matrix $\Sigma$. A classical estimator of this type is due to Krasulina\cite{krasulina_method_1969}. We are going to state the convergence proof of Krasulina for the smallest eigenvalue and corresponding eigenvector, and then find their convergence rate.
\end{abstract}

\maketitle

\section{Introduction}

Principal component analysis (PCA) is one of the most widely used dimension reduction techniques in data analysis. Suppose $X_1, X_2,..., X_n$ are vectors drawn i.i.d. from a distribution with mean zero and covariance $\Sigma$, where $\Sigma\in\mathbb{R}^{d\times d}$ is unknown. Let $A_n = X_nX_n^T$, then $E[A_n] = \Sigma$. We are interested in finding eigenvalues of matrix $\Sigma$ and the corresponding eigenvectors if identifiable.

This problem has been intensively studied especially in the offline setting where all the observations are available at once, see \cite{arora_stochastic_2013, blanchard_statistical_2007, cai_sparse_2013, roweis_em_1998, vu_minimax_2012, warmuth_randomized_2007, zwald_convergence_2006}. For instance, \cite{cai_sparse_2013} derived the sharp minimax rate of estimation of the eigenvectors for the following Frobenius risk $E[\|\Theta\Theta^T-\hat{\Theta}\hat{\Theta}^T\|_F^2]$, where $\Theta = [\theta_1,\theta_2,...,\theta_r]$ is the matrix of eigenvectors and $\hat{\Theta}$ is the corresponding estimator. Recently, \cite{koltchinskii_concentration_2014,koltchinskii2016asymptotics,koltchinskii2017normal} derived subtle results about the behavior of the standard PCA method in an infinite-dimensional setting.

In the high dimensional setting and for massive data sets, the computational complexity of PCA may become an issue. Indeed, for data in $\mathbb{R}^d$, the default method needs storage space in the order of $O(d^2)$. Therefore, it is interesting to develop online incremental schemes that only take one data point at a time to update estimators of eigenvectors and eigenvalues. The least storage consuming methods only need $O(d)$ space to compute one eigenvector. 

Assume matrix $\Sigma$ has the standard decomposition:
\begin{equation}\label{AAssumptino}
\Sigma = \sum_{j = 1}^d \lambda_j \theta_j\otimes\theta_j,
\end{equation}
where eigenvalues $\lambda_j$'s satisfy: $\lambda_1<\lambda_2\leq \lambda_3
\leq...<\lambda_d$ and $\theta_j$ are the corresponding eigenvectors. We assume here that $\lambda_1 < \lambda_2$ so that $\theta_1$ is identifiable up to sign. 
To compute the smallest eigenvalue and corresponding eigenvector, Krasulina\cite{krasulina_method_1969} suggested the following stochastic gradient scheme. At time $n+1$, the estimate of the smallest eigenvector $V_{n+1}$ is updated as follows:
\begin{equation}\label{Krasulina1}
V_{n+1} = V_n - \gamma_{n+1}\xi_{n+1},
\end{equation}
where $\{\gamma_{n}\}$ is the learning rate, typically, $\{\gamma_n\}$ is chosen such that 
\begin{equation}\label{GammaAssumption}
\sum \gamma_n = \infty\text{, }\sum \gamma_n^2 < \infty.
\end{equation}
For example, $\gamma_n = \frac{c}{n}$ where $c$ is an absolute constant, in practice, we can choose $c = 1$. And 
\begin{align*}
 \xi_{n+1} &= <X_{n+1},V_n>\cdot X_{n+1} - \frac{<X_{n+1},V_n>^2}{\|V_n\|^2}\cdot V_n\\
 &= A_{n+1}\cdot V_n - \frac{<A_{n+1} V_n,V_n>}{\|V_n\|^2}\cdot V_n.
\end{align*}

There has been a lot of effort to compute the spectrum decomposition. Oja and Karhunen\cite{oja_stochastic_1985} suggested a method which is closely related to Krasulina's, they use the update for the leading eigenvector as follows:
\begin{equation}\label{Oja}
V_{n+1} = \frac{V_n+\gamma_{n+1}<X_{n+1},V_n>X_{n+1}}{\|V_n+\gamma_{n+1}<X_{n+1},V_n>X_{n+1}\|}.
\end{equation}

\cite{krasulina_method_1969,oja_stochastic_1985} proved that these estimators converge almost surely under the assumption \eqref{AAssumptino}, \eqref{GammaAssumption} and $E[\|X_n\|^k]<\infty$ for some suitable $k$.

There are many other incremental estimators whose convergence has not been established yet. \cite{weng_candid_2003} introduces a candid covariance-free incremental PCA algorithm with assumption \eqref{AAssumptino}, they suggest the estimator: \begin{equation}\label{weng2003}
V_{n+1}=\frac{n-1-l}{n}V_{n-1}+\frac{1+l}{n}X_nX_n^T\frac{V_{n-1}}{\|V_{n-1}\|},
\end{equation}
where $l$ is called the amnesic parameter. With the presence of $l$, larger weight is given to new “samples” and the effect of old “samples” will fade out gradually. Typically, $l$ ranges from $2$ to $4$. They also addressed the estimation of additional eigenvectors by first subtracting from the data its projection on the estimated eigenvectors, then applying $\eqref{weng2003}$. \cite{arora_stochastic_2012} considers PCA problem as stochastic optimization problem, it considers an unknown source distribution over $\mathbb{R}^d$, and would like to find the k-dimensional subspace maximizing the variance of the distribution inside the subspace. They solve the problem by stochastic gradient descent, and suggests the updates:
$$V_{n+1} = \mathcal{P}_{orth}(V_n + \eta_nX_nX_n^TV_n),$$ where $\mathcal{P}_{orth}(V)$ performs a projection with respect to the spectral norm of $VV^T$ onto the
set of $d\times d$ matrices with $k$ eigenvalues equal to $1$ and the rest $0$, $\eta_n$ is the step size.

There also exist many results which analyze incremental PCA from the statistical perspective. They mainly show the asymptotic consistency of estimators under certain conditions. For example, \cite{mitliagkas_memory_2013} suggests a Block-Stochastic Power Method. \cite{jain_streaming_2016} finds an upper bound in probability $1-\delta$ of alignment loss function $1-\frac{<V_n,\theta_1>^2}{\|V_n\|^2}$ for Oja's estimator.

As for non-asymptotic result, \cite{balsubramani_fast_2013} derives sub-optimal bound on the alignment loss
$
L(V_n,\theta_1) := E\left[1-\frac{<V_n,\theta_1>^2}{\|V_n\|^2}\right],
$ and \cite{de_sa_accelerated_2017} introduces Mini-batch Power Method.

\medskip

Krasulina states the convergence of the smallest eigenvalue and eigenvector estimators, but did not provide convergence rate. In this paper, we find the rate of convergence for both eigenvalue and eigenvector estimators of Krasulina \eqref{Krasulina1} under a relatively mild assumption. Our analysis reveals a slower rate of convergence of eigenvalue estimator $\hat{\lambda_1} = \frac{<A_nV_n,V_n>}{\|V_n\|^2}$ and corresponding eigenvector estimator $\hat{\theta_1} = \frac{V_n}{\|V_n\|}$ as compared to the offline setting for Krasulina's scheme.

\medskip

\noindent\textbf{Notations:} for any vector $x\in\mathbb{R}^d$, we denote by $\|x\|$ the $l^2-norm$ of $x$. For the sake of simplicity, for any matrix $A$, $\|A\|$ will refer to the operator norm of $A$, specifically, $\|A\| = \sup_{u,v} \frac{<Au,v>}{\|u\|\|v\|}$. For series $\{x\}_n,\{y\}_n$, $x_n\asymp_p y_n$ is defined as: $\forall \epsilon>0$, there exists a finite $M>0$ and a finite $N>0$, such that $P(\frac{1}{M}<|\frac{y_n}{x_n}|<M)<1 - \epsilon$, $\forall n>N$. $y_n\lesssim_p x_n$ is defined as: $\forall \epsilon>0$, there exists a finite $M>0$ and a finite $N>0$, such that $P(|\frac{y_n}{x_n}|<M)<1 - \epsilon$.

\section{Main Results}

We now state our main result:

\begin{theorem}\label{thm:main}

Assume $\lambda_1<\lambda_2$, \eqref{GammaAssumption} and $E\|A_n\|^2 < \infty$, Set $g = \lambda_2-\lambda_1$. Then the Krasulina estimator $\eqref{Krasulina1}$ satisfies as $n\rightarrow \infty$,
$$
|\hat{\lambda_1}-\lambda_1|\asymp_p \frac{\|\Sigma\|}{\sqrt{n}}\cdot (\sqrt{E[\|A_n\|^2]}\bigvee\|\Sigma\|)$$ and
$$
L(V_n,\theta_1)\asymp_p \frac{\|\Sigma\|}{g\sqrt{n}}\cdot (\sqrt{E[\|A_n\|^2]}\bigvee\|\Sigma\|).
$$
\end{theorem}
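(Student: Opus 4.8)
The plan is to reduce both statements to the asymptotic analysis of the scalar "alignment error" process and its complement, following the Robbins--Monro / stochastic approximation philosophy used by Krasulina, but pushing it one step further to extract a rate. First I would introduce the normalized quantities $\theta^{(n)} = V_n/\|V_n\|$ and the loss $L_n := 1 - \langle \theta^{(n)},\theta_1\rangle^2 = \sin^2\angle(V_n,\theta_1)$, and derive a one-step recursion for $L_n$. Plugging \eqref{Krasulina1} into $L_{n+1}$ and expanding to second order in $\gamma_{n+1}$, one gets schematically
\begin{equation}\label{eq:recursion}
L_{n+1} = L_n - 2\gamma_{n+1}\,\langle \xi_{n+1}^{\perp}, \theta^{(n)}\rangle\text{-type term} + \gamma_{n+1}^2 R_{n+1},
\end{equation}
where the drift term, after taking conditional expectation, is $-2\gamma_{n+1}\,\mathbb{E}[\,\cdot\mid \mathcal F_n] \approx -2 g\,\gamma_{n+1} L_n(1-L_n) + O(\gamma_{n+1}^2)$ because $\Sigma$ restricted to the orthogonal complement of $\theta_1$ has eigenvalues at least $\lambda_1 + g$, while $\theta_1$ has eigenvalue $\lambda_1$; the $\gamma_{n+1}^2$ remainder is controlled in $L^1$ by the hypothesis $\mathbb{E}\|A_n\|^2<\infty$. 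This identifies the problem as a stochastic approximation with a stable fixed point at $L=0$ and contraction rate $2g$.

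Next I would run the standard two-part argument. Part one: almost sure convergence $L_n \to 0$, which is exactly Krasulina's theorem; I would cite it (or re-derive it quickly via the Robbins--Siegmund supermartingale lemma applied to \eqref{eq:recursion}, using $\sum\gamma_n^2<\infty$ to absorb the remainder and $\sum\gamma_n=\infty$ to force the limit to the root). Part two: once $L_n$ is small, linearize — the $(1-L_n)$ factor $\to 1$, so near the fixed point $L_n$ behaves like the solution of $L_{n+1} = (1 - 2g\gamma_{n+1})L_n + \gamma_{n+1}^2 R_{n+1} + \gamma_{n+1} M_{n+1}$ with $M_{n+1}$ a martingale difference whose conditional second moment is $\asymp L_n \cdot (\text{variance of } \xi)$. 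With $\gamma_n = c/n$ and $c$ chosen so that $2gc > 1$ (or more carefully, handling the boundary), the classical rate theorem for Robbins--Monro gives $\mathbb{E}[L_n] \asymp \gamma_n \asymp 1/n$ from the martingale-noise term and $\asymp \gamma_n^2$ from the bias term, so the martingale term dominates and $L_n \asymp_p 1/n$; tracking constants, the fluctuation scale is set by $\mathrm{Var}(\xi_{n+1}) \lesssim \mathbb{E}[\|A_{n+1}\|^2]\|V_n\|^{-2}\cdot(\cdot) \asymp \|\Sigma\| \sqrt{\mathbb{E}[\|A_n\|^2]}$, divided by $g$, which after taking square roots to pass from $L_n = \sin^2$ to the stated bound (actually $L(V_n,\theta_1)$ is already $\sin^2$, so it is $\asymp_p \frac{\|\Sigma\|}{g\sqrt n}(\sqrt{\mathbb E\|A_n\|^2}\vee\|\Sigma\|)$ directly — note the $\sqrt n$ comes from the variance being $\propto L_n$, which gives the slower $n^{-1/2}$ rather than $n^{-1}$). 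For the eigenvalue, write $\hat\lambda_1 - \lambda_1 = \frac{\langle A_n V_n,V_n\rangle}{\|V_n\|^2} - \lambda_1 = \langle \Sigma\theta^{(n)},\theta^{(n)}\rangle - \lambda_1 + (\text{noise } \langle(A_n-\Sigma)\theta^{(n)},\theta^{(n)}\rangle)$; the first piece is $\sum_{j\ge2}(\lambda_j-\lambda_1)\langle\theta^{(n)},\theta_j\rangle^2 = O(L_n) \asymp_p \|\Sigma\| \cdot \frac{\|\Sigma\|}{g\sqrt n}(\cdots)$... wait — more carefully, $\langle\Sigma\theta^{(n)},\theta^{(n)}\rangle - \lambda_1 \asymp g L_n$ when the mass of $L_n$ sits on $\theta_2$, so this contributes $\asymp \|\Sigma\|\sqrt{\mathbb E\|A_n\|^2}/\sqrt n$; the additional noise term $\langle(A_n-\Sigma)\theta^{(n)},\theta^{(n)}\rangle$ has standard deviation $\asymp \sqrt{\mathbb E\|A_n\|^2}$ per step but is evaluated at the current $n$, hence is $O_p(\sqrt{\mathbb E\|A_n\|^2})$ — one must check it is actually of the same order $\frac{\|\Sigma\|}{\sqrt n}(\sqrt{\mathbb E\|A_n\|^2}\vee\|\Sigma\|)$ or smaller; I expect it to be comparable, giving the stated matching upper and lower bounds.

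The main obstacle, and the place I would spend the most care, is establishing the \emph{lower} bound in $\asymp_p$ — i.e. that $L_n$ (and $|\hat\lambda_1-\lambda_1|$) does not decay faster than $n^{-1/2}$. Upper bounds from supermartingale/Robbins--Siegmund arguments are comparatively routine, but to show the rate is tight one needs a genuine fluctuation lower bound: the martingale noise $M_{n+1}$ must be shown to be non-degenerate (its conditional variance bounded below by $c\, L_n$ times a positive constant, which requires that $X_{n+1}$ genuinely has a component along the eigen-directions orthogonal to $\theta^{(n)}$ — this should follow from $\Sigma$ having full rank on $\theta_1^\perp$, or at least $\lambda_2>0$), and then a second-moment or CLT-type argument (e.g. the stochastic-approximation CLT of Fabian, or a direct computation of $\mathbb E[L_n]$ from below by iterating $\mathbb E[L_{n+1}] \ge (1-2g\gamma_{n+1})\mathbb E[L_n] + c\gamma_{n+1}^2 \mathbb E[L_n] \cdot(\text{variance const}) - (\text{higher order})$) pins the exact order. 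Converting an in-expectation lower bound to the $\asymp_p$ (in-probability two-sided) statement in the paper's notation requires a matching upper bound on $\mathbb E[L_n^2]$ so that Paley--Zygmund applies; this bookkeeping, together with correctly tracking how the constants $\|\Sigma\|$, $\sqrt{\mathbb E\|A_n\|^2}$ and $g$ enter the noise variance, is the technical heart of the proof. Everything else — the recursion \eqref{eq:recursion}, the a.s. convergence, the linearization near $L=0$ — is standard stochastic-approximation machinery.
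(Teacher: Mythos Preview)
Your route and the paper's are genuinely different, and the difference is exactly where your argument breaks down.

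The paper does \emph{not} work with $L_n$. Its primary object is the Rayleigh quotient $\mu(V_n)=\langle\Sigma V_n,V_n\rangle/\|V_n\|^2$, for which it derives the recursion
\[
\mu(V_{n+1})=c_n\bigl(\mu(V_n)-2\gamma_{n+1}f(V_n)-2\gamma_{n+1}Z_n+b_n\bigr),
\]
with $f(V_n)=\|\Sigma V_n\|^2/\|V_n\|^2-\mu(V_n)^2\ge 0$, $Z_n$ centered with bounded second moment, $b_n=O(\gamma_{n+1}^2)$, and $c_n\to 1$. The crucial structural point is that this recursion has \emph{no multiplicative contraction}: telescoping and sending the upper index to infinity gives
\[
\mu(V_n)-\lambda_1 \;=\; 2\sum_{j\ge n}\gamma_{j+1}f(V_j)\;+\;2\sum_{j\ge n}\gamma_{j+1}Z_j\;+\;O(1/n),
\]
and the $n^{-1/2}$ rate comes purely from the martingale tail $\sum_{j\ge n}\gamma_{j+1}Z_j$, whose $L^2$ norm is $(\sum_{j\ge n}\gamma_{j+1}^2\,\mathbb E Z_j^2)^{1/2}\asymp n^{-1/2}$; tracking $\mathbb E[Z_j^2]\lesssim \|\Sigma\|^2(\mathbb E\|A_n\|^2\vee\|\Sigma\|^2)$ yields the constants. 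The eigenvector rate is then read off \emph{from} the eigenvalue rate, via the algebraic identity $f(V_n)=\sum_i (a_i^{(n)})^2(\lambda_i^2-\mu(V_n)^2)/\|V_n\|^2$, which links $1-\langle V_n,\theta_1\rangle^2/\|V_n\|^2$ to $\mu(V_n)-\lambda_1$ and produces the extra factor $1/g$.

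Your plan inverts this order and, more importantly, produces a recursion of a different type: $L_{n+1}\approx(1-2g\gamma_{n+1})L_n+\gamma_{n+1}M_{n+1}+O(\gamma_{n+1}^2)$ has a genuine multiplicative contraction factor $1-2g\gamma_{n+1}$. Textbook Robbins--Monro rate theory for such a recursion with $\gamma_n=c/n$ and bounded noise variance gives $L_n\asymp_p 1/n$ (provided $2gc>1$), not $1/\sqrt n$. Your attempt to recover the theorem's $n^{-1/2}$ by invoking ``the variance being $\propto L_n$'' goes the wrong way: noise that shrinks with $L_n$ accelerates convergence, it does not slow it. So as written, your scheme either (i) does not actually have the contraction you claim, in which case the Robbins--Siegmund/Fabian machinery you cite does not apply in the form you describe, or (ii) does have it, in which case it does not deliver the stated $n^{-1/2}$ rate. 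Either way the argument does not close on the theorem as stated. The paper avoids this mismatch precisely because the $\mu(V_n)$ recursion carries $\mu(V_n)$ with coefficient $c_n\approx 1$ rather than $1-2g\gamma_{n+1}$, so the error is a pure martingale tail sum and the $n^{-1/2}$ is forced by $\sum_{j>n}\gamma_j^2\asymp 1/n$.
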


In Particular, if we require the $X_k$'s to be normal random vectors, then $$\|A_n\| = \|X_n\|^2 \overset{d}{=} \sum_{j = 1} ^ d \lambda_jZ_j^2,$$ where $Z_j\overset{i.i.d.}{\sim}N(0,1)$. Consequently, we get 
$$
E[\|A_n\|^2] = E[\sum_{j=1}^d \lambda_j^2 Z_j^4 + 2\sum_{i\neq j}\lambda_i\lambda_jZ_i^2Z_j^2] = 2tr(\Sigma^2)+tr(\Sigma)^2\lesssim_p tr(\Sigma)^2.
$$

Thus we have following corollary:

\begin{corollary} Let the Assumptions of 
Theorem \ref{thm:main} be satisfied. Assume in addition that $\{X_k\}$ are i.i.d. zero mean normal random vectors with covariance matrix $\Sigma$. We have for the Krasulina scheme $\eqref{Krasulina1}$ as $n\rightarrow \infty$ that 
$$|\hat{\lambda_1}-\lambda_1|\asymp_p \frac{\|\Sigma\|tr(\Sigma)}{\sqrt{n}},
$$
and  
$$
L(V_n,\theta_1) \asymp_p \frac{\|\Sigma\|tr(\Sigma)}{g\sqrt{n}}.
$$

\end{corollary}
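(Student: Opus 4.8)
The plan is to read off the corollary directly from Theorem \ref{thm:main}, since both of its conclusions are expressed through the single deterministic scalar $\sqrt{E[\|A_n\|^2]}\bigvee\|\Sigma\|$. It therefore suffices to show that, under the added Gaussian assumption, this scalar is comparable up to absolute constants to $tr(\Sigma)$, and then to substitute.

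First I would pass to the spectral coordinates of $\Sigma$: writing $X_n \overset{d}{=} \sum_{j=1}^d \sqrt{\lambda_j}\,Z_j\,\theta_j$ with $Z_j$ i.i.d. $N(0,1)$ reproduces the correct mean $0$ and covariance $\Sigma$, and gives $\|A_n\| = \|X_n\|^2 = \sum_{j=1}^d \lambda_j Z_j^2$. Squaring, expanding, and using $E[Z_j^4]=3$ together with independence of the $Z_j$ yields
$$
E[\|A_n\|^2] = 3\sum_{j=1}^d\lambda_j^2 + \sum_{i\neq j}\lambda_i\lambda_j = 2\sum_{j=1}^d\lambda_j^2 + \Big(\sum_{j=1}^d\lambda_j\Big)^2 = 2\,tr(\Sigma^2) + tr(\Sigma)^2,
$$
which is exactly the identity recorded just before the corollary.

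Next I would sandwich this quantity between constant multiples of $tr(\Sigma)^2$. Since $\Sigma$ is positive semidefinite, $tr(\Sigma^2) = \sum_j \lambda_j^2 \le \big(\sum_j\lambda_j\big)^2 = tr(\Sigma)^2$, so $tr(\Sigma)^2 \le E[\|A_n\|^2] \le 3\,tr(\Sigma)^2$, i.e. $\sqrt{E[\|A_n\|^2]} \asymp tr(\Sigma)$ with the explicit constants $1$ and $\sqrt{3}$. Moreover $\|\Sigma\| = \lambda_d \le \sum_j\lambda_j = tr(\Sigma) \le \sqrt{E[\|A_n\|^2]}$, so the $\bigvee\|\Sigma\|$ term is never active and $\sqrt{E[\|A_n\|^2]}\bigvee\|\Sigma\| \asymp tr(\Sigma)$. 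Plugging this into the two estimates of Theorem \ref{thm:main} and absorbing the absolute constants into the $\asymp_p$ relation — legitimate because multiplying by a deterministic factor bounded above and below preserves $\asymp_p$ — gives $|\hat{\lambda_1}-\lambda_1|\asymp_p \|\Sigma\|\,tr(\Sigma)/\sqrt{n}$ and $L(V_n,\theta_1)\asymp_p \|\Sigma\|\,tr(\Sigma)/(g\sqrt{n})$. There is no real obstacle here: all of the substance lives in Theorem \ref{thm:main}, and the only point needing a moment's care is checking that $\bigvee\|\Sigma\|$ does not dominate, which is immediate from $\|\Sigma\|\le tr(\Sigma)$ for a positive semidefinite matrix.
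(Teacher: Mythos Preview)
Your proposal is correct and follows the same route as the paper: represent $\|A_n\|=\|X_n\|^2=\sum_j\lambda_jZ_j^2$ in spectral coordinates, compute $E[\|A_n\|^2]=2\,tr(\Sigma^2)+tr(\Sigma)^2$, compare with $tr(\Sigma)^2$, and substitute into Theorem~\ref{thm:main}. If anything, you are more careful than the paper, which only records the upper bound $E[\|A_n\|^2]\lesssim tr(\Sigma)^2$; your two-sided sandwich $tr(\Sigma)^2\le E[\|A_n\|^2]\le 3\,tr(\Sigma)^2$ and the observation $\|\Sigma\|\le tr(\Sigma)$ are exactly what is needed to justify the $\asymp_p$ (rather than merely $\lesssim_p$) in the corollary.
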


Note that for most of the times, we are interested in the top eigenvalue and the corresponding eigenvector. The scheme of Krasulina only computes the least eigenvalue and the corresponding eigenvector. However, our result is still useful, since by multiply with $-1$ in the original matrix, the least eigenvalue becomes the top eigenvalue. So that we can still use the scheme of Krasulina to compute the top eigenvalue with the same speed of convergence. The update formula \eqref{Krasulina1} will then become:
\begin{equation}\label{top_eig}
V_{n+1} = V_n + \gamma_{n+1}\xi_{n+1}.
\end{equation}

\section{Proof of the Theorem}
We first state a basic result in probability that will be used throughout the paper.

\begin{lemma}\label{basic_prob_thm}
Let $\{Y_n\}_n$ be a sequence of real-valued random variable. We assume that for all $n \geq1$, $Y_n$ is zero mean and square integrable. Define $S_n = \sum_{k=1}^n Y_k$. If $\sum_{n\geq1}E[Y_n^2] < \infty$, then $\{S_n\}_n$ converges to a real-valued random variable in probability.
\end{lemma}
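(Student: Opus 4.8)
The plan is to show that $\{S_n\}_n$ is a Cauchy sequence in $L^2(\Omega)$ and then upgrade $L^2$-convergence to convergence in probability. First I would exploit the orthogonality of the increments: for $m<n$,
$$
E\big[(S_n-S_m)^2\big]=E\Big[\Big(\sum_{k=m+1}^n Y_k\Big)^2\Big]=\sum_{k=m+1}^n E[Y_k^2]+\sum_{m<j\neq k\leq n}E[Y_jY_k]=\sum_{k=m+1}^n E[Y_k^2],
$$
where the cross terms vanish because the summands are (pairwise) uncorrelated — in the applications in this paper the $Y_k$ are increments of a martingale adapted to the filtration $\mathcal{F}_k=\sigma(X_1,\dots,X_k)$, so that $E[Y_jY_k]=E\big[Y_j\,E[Y_k\mid\mathcal{F}_{k-1}]\big]=0$ for $j<k$. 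Thus $E[(S_n-S_m)^2]\leq\sum_{k=m+1}^\infty E[Y_k^2]$.

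Second, since by hypothesis $\sum_{n\geq1}E[Y_n^2]<\infty$, the tail $\sum_{k=m+1}^\infty E[Y_k^2]\to0$ as $m\to\infty$, and this bound is uniform in $n>m$; hence $\{S_n\}_n$ is Cauchy in $L^2(\Omega)$. By completeness of $L^2(\Omega)$ there exists a square-integrable random variable $S_\infty$ with $E[(S_n-S_\infty)^2]\to0$.

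Finally, Chebyshev's inequality gives, for every $\varepsilon>0$, $P(|S_n-S_\infty|>\varepsilon)\leq\varepsilon^{-2}E[(S_n-S_\infty)^2]\to0$, so $S_n\to S_\infty$ in probability, which is the claim. The only point requiring care (the main ``obstacle'') is the vanishing of the cross terms $E[Y_jY_k]$: it needs the $Y_k$ to be at least pairwise uncorrelated, which in our setting follows from the martingale-difference structure via the tower property; alternatively, one may invoke the $L^2$ martingale convergence theorem directly, which in fact also yields almost sure convergence of $S_n$.
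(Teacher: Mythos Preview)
Your proof is correct and follows essentially the same route as the paper's: expand $E[(S_n-S_m)^2]$, kill the cross terms via the martingale-difference property and the tower rule, bound by the tail of $\sum E[Y_k^2]$ to get Cauchy in $L^2$, and then pass to convergence in probability. The paper invokes Kolmogorov's inequality at the final step where you use Chebyshev, but this is an immaterial difference; you also explicitly flag that the vanishing of $E[Y_jY_k]$ requires the martingale-difference structure, which the paper's lemma statement leaves implicit but its proof uses.
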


\begin{proof}
By definition, $S_n = \sum_{k=1}^n Y_k$, since $Y_n$ is square integrable:
\begin{equation}\label{partial_sum}
E[|S_{n+r}-S_n|^2]=E[(\sum_{i=n+1}^{n+r} Y_i)^2] = \sum_{i = n+1}^{n+r}E[Y_i^2] + \sum_{n+1\leq i<j\leq n+r} 2E[Y_i\cdot Y_j].  
\end{equation}
Since $Y_n$ is zero mean, then for $i<j$:
\begin{equation*}
E[Y_i\cdot Y_j] = E[E[Y_i\cdot Y_j|\mathcal{F}_i]]=E[Y_i\cdot E[Y_j|\mathcal{F}_i]]=0,
\end{equation*}
plug it into \eqref{partial_sum}, we obtain:
$$E[|S_{n+r}-S_n|^2] = \sum_{i = n+1}^{n+r}E[Y_i^2]\leq \sum_{i>n}E[Y_i^2],$$ this is the remainder term of a convergence series, thus $\{S_n\}_n$ is Cauchy, so $\{S_n\}_n$ converges to a real-valued random variable in $\mathcal{L}^2$. By Kolmogorov inequality, Lemma \ref{basic_prob_thm} follows.
\end{proof}

Now, we start by bounding the asymptotic expectation of $\|V_n\|^2$:

\begin{lemma}
$\lim_{n\to\infty} E\|V_n\|^2<\infty$.
\end{lemma}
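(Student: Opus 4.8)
The plan is to derive a recursion for $E\|V_{n+1}\|^2$ in terms of $E\|V_n\|^2$ by expanding the update $\eqref{Krasulina1}$. Writing $V_{n+1} = V_n - \gamma_{n+1}\xi_{n+1}$, I would first compute
$\|V_{n+1}\|^2 = \|V_n\|^2 - 2\gamma_{n+1}\langle V_n,\xi_{n+1}\rangle + \gamma_{n+1}^2\|\xi_{n+1}\|^2.$
The key algebraic observation is that $\langle V_n,\xi_{n+1}\rangle = 0$: indeed $\langle V_n,\xi_{n+1}\rangle = \langle A_{n+1}V_n,V_n\rangle - \frac{\langle A_{n+1}V_n,V_n\rangle}{\|V_n\|^2}\|V_n\|^2 = 0$ by the very definition of $\xi_{n+1}$. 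So the cross term vanishes identically and $\|V_{n+1}\|^2 = \|V_n\|^2 + \gamma_{n+1}^2\|\xi_{n+1}\|^2$, which already shows $\|V_n\|^2$ is nondecreasing; the content is to show it does not blow up in expectation.

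Next I would bound $E[\|\xi_{n+1}\|^2 \mid \mathcal{F}_n]$. Since $\xi_{n+1} = A_{n+1}V_n - \frac{\langle A_{n+1}V_n,V_n\rangle}{\|V_n\|^2}V_n$ is the projection of $A_{n+1}V_n$ onto the orthogonal complement of $V_n$, we have $\|\xi_{n+1}\|^2 \le \|A_{n+1}V_n\|^2 \le \|A_{n+1}\|^2\|V_n\|^2$. Taking conditional expectation given $\mathcal{F}_n$ and using that $A_{n+1}$ is independent of $\mathcal{F}_n$ gives $E[\|\xi_{n+1}\|^2\mid\mathcal{F}_n] \le E[\|A_{n+1}\|^2]\,\|V_n\|^2 = E[\|A_n\|^2]\,\|V_n\|^2$ (identically distributed), which is finite by hypothesis. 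Taking full expectations yields the recursion
$E\|V_{n+1}\|^2 \le \bigl(1 + \gamma_{n+1}^2 E[\|A_n\|^2]\bigr) E\|V_n\|^2.$

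Finally I would iterate this recursion from $n=1$ to get $E\|V_n\|^2 \le E\|V_1\|^2 \prod_{k=2}^{n}\bigl(1+\gamma_k^2 E[\|A_k\|^2]\bigr)$, and use the elementary bound $\prod_k(1+a_k) \le \exp(\sum_k a_k)$ together with $\sum_k \gamma_k^2 < \infty$ from $\eqref{GammaAssumption}$ to conclude $E\|V_n\|^2 \le E\|V_1\|^2 \exp\bigl(E[\|A_n\|^2]\sum_{k\ge 2}\gamma_k^2\bigr) < \infty$ uniformly in $n$, hence the limit superior is finite; since the sequence $E\|V_n\|^2$ is moreover nondecreasing and bounded, the limit exists and is finite. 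I do not anticipate a serious obstacle here — the only point requiring care is the orthogonality identity $\langle V_n,\xi_{n+1}\rangle=0$ that kills the cross term (so that no lower-order-in-$\gamma$ term can cause growth), and the measurability bookkeeping that $V_n$ is $\mathcal{F}_n$-measurable while $A_{n+1}$ is independent of $\mathcal{F}_n$; everything else is the standard Robbins–Siegmund / Gronwall-type estimate.
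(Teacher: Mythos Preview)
Your proposal is correct and follows essentially the same route as the paper: both exploit the orthogonality $\langle V_n,\xi_{n+1}\rangle=0$ to kill the cross term, bound $E[\|\xi_{n+1}\|^2\mid\mathcal{F}_n]$ by a constant times $\|V_n\|^2$, and iterate the resulting multiplicative recursion using $\sum\gamma_n^2<\infty$. The only cosmetic differences are that the paper derives orthogonality via the factorization $\xi_{n+1}=\langle X_{n+1},V_n\rangle\,W_n$ with $W_n\perp V_n$, and uses the constant $tr(\Sigma)^2$ in place of your $E[\|A_n\|^2]$; your constant is in fact the one directly justified by the pointwise bound $\|\xi_{n+1}\|\le\|A_{n+1}\|\,\|V_n\|$.
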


\begin{proof}

First, we prove that $V_n$ and $\xi_{n+1}$ are orthogonal for any $n\geq 1$. 

Let $W_n = X_{n+1}-\frac{<X_{n+1},V_n>}{\|V_n\|^2}\cdot V_n$, we have:
\begin{eqnarray*}
\xi_{n+1} &=& <X_{n+1},V_n>\cdot X_{n+1} - \frac{<X_{n+1},V_n>^2}{\|V_n\|^2} \cdot V_n\\
          &=& <X_{n+1},V_n> (X_{n+1}-\frac{<X_{n+1},V_n>}{\|V_n\|^2}\cdot V_n)\\
          &=& <X_{n+1},V_n>\cdot W_n.
\end{eqnarray*}

We note that $<W_n,V_n> = 0$, so $$\|\xi_{n+1}\| = <X_{n+1},V_n>\cdot \|W_n\| \leq <X_{n+1},V_n>\cdot\|X_{n+1}\|\leq\|X_{n+1}\|^2\|V_n\|,$$ thus: 
\begin{equation}\label{xi_bound}
    E[\|\xi_{n+1}\| |\mathcal{F}_n]\leq E[\|X_{n+1}\|^2]\cdot \|V_n\| = tr(\Sigma)\|V_n\|.
\end{equation}

Now since $\xi_n\perp V_{n-1}$, we have $$\|V_n\|^2 = \|V_{n-1}-\gamma_n\xi_n\|^2 = \|V_{n-1}\|^2 + \gamma_n^2\|\xi_n\|^2,$$ thus:
\begin{eqnarray*}
E[\|V_{n}\|^2|\mathcal{F}_{n-1}] &=& \|V_{n-1}\|^2 + \gamma_{n}^2E[\|\xi_{n}\|^2|\mathcal{F}_{n-1}]\\
                                 &\leq& \|V_{n-1}\|^2 + \gamma_{n}^2 tr(\Sigma)^2\|V_{n-1}\|^2\\
                                 &=& (1+\gamma_{n}^2tr(\Sigma)^2)\|V_{n-1}\|^2
\end{eqnarray*}

Thus:
\begin{eqnarray*}
E\|V_n\|^2  &\leq& (1+\gamma_{n}^2tr(\Sigma)^2) E\|V_{n-1}\|^2\\
            &\leq& ...\leq \prod_{i=2}^{n}(1+\gamma_i^2tr(\Sigma)^2)\cdot E\|V_1\|^2
\end{eqnarray*}

By assumption \eqref{GammaAssumption}, we have $\sum_{i=1}^\infty \gamma_i^2tr(\Sigma)^2<\infty$, thus: $\prod_{i=1}^{n-1}(1+\gamma_i^2tr(\Sigma)^2)<\infty$, thus $\lim_{n\to\infty} E\|V_n\|^2 < \infty.$
\end{proof}

Next, let $\mu(V_n) = \frac{<\Sigma V_n,V_n>}{\|V_n\|^2}$, and $a_1^{(n)} = <V_n,\theta_1>$. We first prove the convergence in probability of the sequence of $V_n$ and $a_1^{(n)}$. Specifically, $\mu(V_n)$ converges to $\lambda_1$, and $V_n$ converges to a vector which is alined with $\theta_1$. To prove that, we can recursively properly apply the inequality, to show the Cauchy property of sequence $\mu(V_n)$ and $a_1^{(n)}$.

\begin{lemma}\label{conv_mu_Vn}
$\mu (V_n) = \frac{<\Sigma V_n,V_n>}{\|V_n\|^2}$ converges a.s. to $\mu$ as $n\to\infty$.
\end{lemma}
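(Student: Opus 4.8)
The goal is to show that $\mu(V_n) = \frac{\langle \Sigma V_n, V_n\rangle}{\|V_n\|^2}$ converges almost surely. The natural strategy is to write $\mu(V_n)$ as a near-supermartingale (or, more precisely, to decompose its increments into a predictable "drift" part and a martingale part) and then invoke a Robbins–Siegmund type almost-supermartingale convergence theorem, or alternatively argue the Cauchy property directly in the spirit of Lemma~\ref{basic_prob_thm}. Concretely, I would first expand $\mu(V_{n+1}) - \mu(V_n)$ using the update $V_{n+1} = V_n - \gamma_{n+1}\xi_{n+1}$ together with the orthogonality $\xi_{n+1}\perp V_n$ established in the previous lemma, so that $\|V_{n+1}\|^2 = \|V_n\|^2 + \gamma_{n+1}^2\|\xi_{n+1}\|^2$ and the denominator behaves nicely. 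Expanding $\langle \Sigma V_{n+1}, V_{n+1}\rangle = \langle \Sigma V_n, V_n\rangle - 2\gamma_{n+1}\langle \Sigma V_n, \xi_{n+1}\rangle + \gamma_{n+1}^2\langle \Sigma \xi_{n+1}, \xi_{n+1}\rangle$ and taking conditional expectation given $\mathcal{F}_n$, the cross term $E[\langle \Sigma V_n, \xi_{n+1}\rangle\mid\mathcal{F}_n]$ should reduce to something proportional to the "variance gap" — roughly $\langle \Sigma^2 V_n, V_n\rangle/\|V_n\|^2 - \mu(V_n)^2$ type quantities — which is sign-definite and drives $\mu(V_n)$ toward $\lambda_1$.

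The key steps, in order, are: (1) derive the exact one-step expansion of $\mu(V_{n+1})$ in terms of $\mu(V_n)$, $\gamma_{n+1}$, and $\xi_{n+1}$; (2) take conditional expectation and identify the drift term, showing it has a definite sign (pushing $\mu$ down toward $\lambda_1$) and the remainder terms are $O(\gamma_{n+1}^2)$ times quantities with finite conditional expectation, using $E\|A_n\|^2 < \infty$, the bound \eqref{xi_bound} on $E[\|\xi_{n+1}\|\mid\mathcal{F}_n]$, and the already-proven fact $\sup_n E\|V_n\|^2 < \infty$; (3) write $\mu(V_n) = \mu(V_1) + \sum_{k=1}^{n-1}(\text{drift}_k) + M_n$ where $M_n$ is a martingale with $\sum_k E[(\text{martingale increment}_k)^2] < \infty$ (again from $\sum\gamma_k^2 < \infty$ and the moment bounds), so by Lemma~\ref{basic_prob_thm} the martingale part converges; (4) conclude that the bounded-variation drift part also converges (since $\mu(V_n)$ is bounded between $\lambda_1$ and $\lambda_d$, the partial sums of the drift are bounded, and monotone-in-sign arguments or the almost-supermartingale theorem force convergence), hence $\mu(V_n)$ converges a.s. to some limit $\mu$.

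The main obstacle I anticipate is step (2): controlling the denominator $\|V_n\|^2$, which could in principle drift toward $0$, making the ratios $\langle\Sigma V_n,\xi_{n+1}\rangle/\|V_n\|^2$ and higher-order terms badly behaved. One must show $\|V_n\|^2$ stays bounded away from $0$ (or at least that the relevant normalized quantities remain integrable), presumably by exploiting that $\|V_n\|^2 = \|V_{n-1}\|^2 + \gamma_n^2\|\xi_n\|^2$ is \emph{nondecreasing}, so $\|V_n\|^2 \geq \|V_1\|^2 > 0$ — this is the clean fact that rescues the argument and should be stated explicitly. After that, the bookkeeping is routine: collect all the $\gamma_{n+1}^2$-order error terms, bound them using the operator-norm estimates $\|\xi_{n+1}\| \leq \|X_{n+1}\|^2\|V_n\|$ and $\|\Sigma\| \leq \|\Sigma\|$, take expectations, and verify summability. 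A secondary subtlety is making sure the drift term genuinely has the right sign for \emph{all} $n$ rather than only asymptotically; if it does not, one falls back on the Robbins–Siegmund almost-supermartingale convergence theorem, which tolerates a summable positive perturbation and still yields a.s. convergence.
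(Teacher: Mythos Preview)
Your plan is sound and will work, but it takes a different route from the paper. The paper does \emph{not} use the Doob decomposition plus Robbins--Siegmund. Instead it derives a \emph{multiplicative} one-step recursion
\[
\mu(V_{n+1}) \;=\; c_n\bigl(\mu(V_n) - 2\gamma_{n+1}f(V_n) - 2a_n + b_n\bigr),
\qquad
c_n = \Bigl(1+\gamma_{n+1}^2\tfrac{\|\xi_{n+1}\|^2}{\|V_n\|^2}\Bigr)^{-1},
\]
where $f(V_n)=\|\Sigma V_n\|^2/\|V_n\|^2-\mu(V_n)^2\ge 0$, $a_n=\gamma_{n+1}Z_n$ is a centred summand, and $b_n$ is an $O(\gamma_{n+1}^2)$ term; it shows $\sum a_n$, $\sum b_n$, and $\prod c_n$ all converge, and then rules out $\liminf\mu(V_n)<\limsup\mu(V_n)$ by a direct upcrossing-type contradiction: on any stretch where $\mu$ crosses from below $a$ to above $b$, the telescoped recursion forces the increase to be at most a tail of $\sum(-2a_j+b_j)\prod c_i$, which tends to $0$. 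Your approach instead absorbs the denominator change into an additive $O(\gamma_{n+1}^2)$ perturbation, reads the increment as ``nonpositive drift $-2\gamma_{n+1}f(V_n)$ plus square-summable martingale plus summable error'', and invokes an almost-supermartingale theorem (boundedness $\lambda_1\le\mu(V_n)\le\lambda_d$ replacing any lower-bound worry). Both are correct; the paper's multiplicative form has the practical advantage that the \emph{same} identity \eqref{mun} is reused verbatim in the rate proof of Lemma~\ref{conv_rate_mu_vn}, so if you go the Robbins--Siegmund route you will have to redo or reconcile that computation when you reach the rate analysis. One small point to keep clean in your version: the factor $c_n$ depends on $X_{n+1}$, so the ``martingale'' piece you isolate is not simply $-2\gamma_{n+1}Z_n$ but $-2\gamma_{n+1}Z_n$ plus an $\mathcal F_n$-measurable correction of order $\gamma_{n+1}^2$; make sure you fold that correction into the summable-error bucket before applying Lemma~\ref{basic_prob_thm}.
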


\begin{proof}
\begin{eqnarray*}
\mu(V_{n+1}) &=& \frac{<\Sigma V_n-\gamma_{n+1}\cdot \Sigma\xi_{n+1},V_n-\gamma_{n+1}\xi_{n+1}>}{\|V_n-\gamma_{n+1}\xi_{n+1}\|^2}\\
             &=& \frac{<\Sigma V_n,V_n>+\gamma_{n+1}^2<\Sigma\xi_{n+1},\xi_{n+1}>-2\gamma_{n+1}<\xi_{n+1},\Sigma V_n>}{\|V_n\|^2 + \gamma_{n+1}^2\|\xi_{n+1}\|^2}\\
             &=& \frac{1}{1+\gamma_{n+1}^2\frac{\|\xi_{n+1}\|^2}{\|V_n\|^2}}(\mu(V_n)-2\gamma_{n+1}\frac{<\xi_{n+1},\Sigma V_n>}{\|V_n\|^2}\\
             & &+\gamma_{n+1}^2\frac{<\Sigma\xi_{n+1},\xi_{n+1}>}{\|V_n\|^2})
\end{eqnarray*}

Since:
\begin{eqnarray*}
<\xi_{n+1},\Sigma V_n> &=& <A_{n+1} V_n,\Sigma V_n>-\frac{<A_{n+1}V_n,V_n><\Sigma V_n,V_n>}{\|V_n\|^2}\\
                 &=& \|\Sigma V_n\|^2 - \frac{<\Sigma V_n,V_n>^2}{\|V_n\|^2} + <A_{n+1}V_n,\Sigma V_n> - \|\Sigma V_n\|^2 \\
                 & & -\frac{<A_{n+1}V_n,V_n><\Sigma V_n,V_n>}{\|V_n\|^2} + \frac{<\Sigma V_n,V_n>^2}{\|V_n\|^2}\\
                 &=& (<(A_{n+1}-\Sigma)V_n,\Sigma V_n>-\frac{<(A_{n+1}-\Sigma)V_n,V_n>}{\|V_n\|^2}\\
                 & &\cdot<\Sigma V_n,V_n>) + (\|\Sigma V_n\|^2 - \frac{<\Sigma V_n,V_n>^2}{\|V_n\|^2})
\end{eqnarray*}

Let 
\begin{equation}\label{fvn}
f(V_n) = \frac{\|\Sigma V_n\|^2}{\|V_n\|^2} - \frac{<\Sigma V_n,V_n>^2}{\|V_n\|^4},
\end{equation}
\begin{equation}\label{zn}
Z_n = \frac{<(A_{n+1}-\Sigma)V_n,\Sigma V_n>}{\|V_n\|^2}-\frac{<(A_{n+1}-\Sigma)V_n,V_n>}{\|V_n\|^4}\cdot<\Sigma V_n,V_n>,
\end{equation}
thus: $\frac{<\xi_{n+1},\Sigma V_n>}{\|V_n\|^2} = f(V_n) + Z_n.$

so $\mu(V_{n+1}) = \frac{1}{1+\gamma_{n+1}^2\frac{\|\xi_{n+1}\|^2}{\|V_n\|^2}}(\mu(V_n)-2\gamma_{n+1}f(V_n)-2\gamma_{n+1}Z_n+\gamma_{n+1}^2\frac{<\Sigma\xi_{n+1},\xi_{n+1}>}{\|V_n\|^2})$. 

Let 
\begin{equation}\label{abc}
a_n = \gamma_{n+1}Z_n\text{, } b_n = \gamma_{n+1}^2\frac{<\Sigma \xi_{n+1},\xi_{n+1}>}{\|V_n\|^2}\text{, } c_n = \frac{1}{1+\gamma_{n+1}^2\frac{\|\xi_{n+1}\|^2}{\|V_n\|^2}}\text{, }
\end{equation}
thus:
$$\mu(V_{n+1}) = c_n\cdot (\mu(V_n)-2\gamma_{n+1}f(V_n)-2a_n + b_n).$$

Now we have:
\begin{equation}\label{mun}
\mu(V_{n+1}) - c_n\cdot \mu(V_n) = -2\gamma_{n+1}c_nf(V_n)-2a_nc_n + b_nc_n.
\end{equation}

For series $\{a_n\}$, since $Z_n$ is centered and $E[Z_n^2]$ is bounded, by lemma \ref{basic_prob_thm}: $$\sum_{i>k} Var(a_i)\asymp_p \sum_{i>k} \gamma_i^2 <\infty,$$ thus $\sum_{n=1}^\infty a_n<\infty$.

For series $\{b_n\}$, by \eqref{xi_bound}: $$E[\|\xi_n\||\mathcal{F}_{n-1}]\leq tr(\Sigma)\|V_n\|,$$
thus 
$$E[b_n|\mathcal{F}_{n}]=\gamma_{n+1}^2E[\frac{<\Sigma\xi_{n+1},\xi_{n+1}>}{\|V_n\|^2}|\mathcal{F}_n]\leq \gamma_{n+1}^2\|\Sigma\|tr(\Sigma)^2.$$
By \eqref{xi_bound}, we have $\sum_{n=1}^\infty b_n<\infty$.

For series $\{c_n\}$, $\prod c_n = \prod \frac{1}{1+\gamma_{n+1}^2\frac{\|\xi_{n+1}\|^2}{\|V_n\|^2}}$ converges when $\prod {1+\gamma_{n+1}^2\frac{\|\xi_{n+1}\|^2}{\|V_n\|^2}}$ converges. $\prod {1+\gamma_{n+1}^2\frac{\|\xi_{n+1}\|^2}{\|V_n\|^2}}$ has the same convergence properties as $\sum \gamma_{n+1}^2\frac{\|\xi_{n+1}\|^2}{\|V_n\|^2}$. By \eqref{xi_bound}, $$E[\frac{\|\xi_{n+1}\|^2}{\|V_n\|^2}|\mathcal{F}_n]\leq tr(\Sigma)^2,$$ we have $\prod_{n=1}^\infty c_n<\infty$.

And by Cauchy-Schwartz inequality:
\begin{equation}\label{Cauchyschwarz}
f(V_n) = \frac{\|\Sigma V_n\|^2}{\|V_n\|^2} - \frac{<\Sigma V_n,V_n>^2}{\|V_n\|^4}\geq0.
\end{equation}

Now, if $\liminf \mu(V_n) < \limsup \mu(V_n)$, choose $a,b$ such that $\liminf \mu(V_n) <a<b< \limsup \mu(V_n)$, find $m_1,n_1$ large enough, such that $\mu(V_{n_1})<a,\mu(V_{m_1})>b,$ and for all $n_1<j<m_1$, we have $a\leq \mu(V_j)\leq b$. Thus:
$$\mu(V_{m_1})-\mu(V_{n_1})\prod_{i=n_1}^{m_1-1}c_i>b-a.$$

On the other hand:
\begin{eqnarray}\label{muiter}
\mu(V_{m_1})-\mu(V_{n_1})\prod_{i=n_1}^{m_1-1}c_i &=& \sum_{j=n_1}^{m_1-1} [(-2\gamma_{j+1}\cdot f(V_j)-2a_j+b_j)\cdot\prod_{i=j}^{m_1-1}c_j]\\
&\leq& \sum_{j=n_1}^{m_1-1} [(-2a_j+b_j)\cdot\prod_{i=j}^{m_1-1}c_j]\nonumber\\
&\to& 0\text{ as }n_1,m_1\to\infty\nonumber,
\end{eqnarray}
which is a contradiction, thus $\mu(V_n)\to\mu$ with probability 1.
\end{proof}

\begin{lemma}\label{conv_a_1}
 $a_1^{(n)} = <V_n,\theta_1>$, where $\theta_1$ is the eigenvector of $\lambda_1$, $a_1^{(n)}$ converges to some value $a_1$ with probability $1$ as $n\to\infty$.
\end{lemma}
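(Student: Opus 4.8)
\medskip
\noindent\emph{Proof proposal.} The plan is to follow the pattern of Lemma~\ref{conv_mu_Vn}: write a one-step recursion for $a_1^{(n)}=\langle V_n,\theta_1\rangle$, peel off a martingale part with square-summable increments, and control the remaining drift by almost surely finite series. The twist is that here the drift coefficient will be $\ge 1$ rather than $\le 1$, so a plain telescoping cannot be closed; instead I will study the nonnegative sequence $(a_1^{(n)})^2$.

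First, I derive the recursion. Pairing \eqref{Krasulina1} with $\theta_1$ and using $\Sigma\theta_1=\lambda_1\theta_1$, $E[A_{n+1}\mid\mathcal F_n]=\Sigma$, together with $\xi_{n+1}=A_{n+1}V_n-\frac{\langle A_{n+1}V_n,V_n\rangle}{\|V_n\|^2}V_n$, a short computation gives
\[
E\bigl[\langle\xi_{n+1},\theta_1\rangle\mid\mathcal F_n\bigr]=\bigl(\lambda_1-\mu(V_n)\bigr)\,a_1^{(n)} .
\]
Setting $\delta_n:=\mu(V_n)-\lambda_1$, which satisfies $0\le\delta_n\le\|\Sigma\|$ because $\lambda_1$ is the smallest eigenvalue, and $M_n:=\langle\xi_{n+1},\theta_1\rangle-E[\langle\xi_{n+1},\theta_1\rangle\mid\mathcal F_n]$, the update \eqref{Krasulina1} becomes
\[
a_1^{(n+1)}=(1+\gamma_{n+1}\delta_n)\,a_1^{(n)}-\gamma_{n+1}M_n .
\]
From the bound $\|\xi_{n+1}\|\le\|X_{n+1}\|^2\|V_n\|$ obtained above and $\|A_n\|=\|X_n\|^2$ we get $E[M_n^2\mid\mathcal F_n]\le E\|A_n\|^2\cdot\|V_n\|^2$. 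Also $|a_1^{(n)}|\le\|V_n\|$, and since $\|V_n\|^2$ is non-decreasing with $\sup_nE\|V_n\|^2<\infty$ (established above), $\|V_n\|$ converges almost surely; in particular $\sup_n\|V_n\|<\infty$ a.s., so $\{a_1^{(n)}\}$ is a.s.\ bounded and $\gamma_{n+1}\delta_na_1^{(n)}\to 0$ a.s.

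Next, I square and sum. Expanding the recursion,
\[
(a_1^{(n+1)})^2-(a_1^{(n)})^2=2\gamma_{n+1}\delta_n(a_1^{(n)})^2-2\gamma_{n+1}M_na_1^{(n)}+\gamma_{n+1}^2\bigl(\delta_na_1^{(n)}-M_n\bigr)^2 .
\]
The last series converges a.s.: its $n$-th term is at most $2\|\Sigma\|^2\gamma_{n+1}^2(a_1^{(n)})^2+2\gamma_{n+1}^2M_n^2$, and $\sum\gamma_n^2<\infty$, $\sup_n(a_1^{(n)})^2<\infty$ a.s., while $E\bigl[\sum_n\gamma_{n+1}^2M_n^2\bigr]\le E\|A_n\|^2\,(\sup_nE\|V_n\|^2)\sum_n\gamma_n^2<\infty$. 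The middle term $\sum_n\gamma_{n+1}M_na_1^{(n)}$ is a martingale transform; localizing on the events $\{\sup_n\|V_n\|\le K\}$, which increase to a probability-one set, makes the predictable factor bounded, so Lemma~\ref{basic_prob_thm} gives a.s.\ convergence. Summing the displayed identity from $1$ to $m-1$ then shows that the nonnegative, hence non-decreasing, partial sums $\sum_{n<m}\gamma_{n+1}\delta_n(a_1^{(n)})^2$ equal $(a_1^{(m)})^2$ minus two convergent (hence bounded) quantities, so they are bounded above and therefore converge; it follows that $(a_1^{(n)})^2$ converges a.s.\ to some $c^2\ge 0$.

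Finally, I remove the sign ambiguity. Since $\sum_n\gamma_{n+1}^2M_n^2<\infty$ a.s.\ forces $\gamma_{n+1}M_n\to 0$ a.s., and $\gamma_{n+1}\delta_na_1^{(n)}\to 0$ a.s., the recursion gives $a_1^{(n+1)}-a_1^{(n)}\to 0$ a.s. If $c=0$ then $a_1^{(n)}\to 0$; if $c>0$, then $|a_1^{(n)}|$ is eventually bounded away from $0$ while consecutive terms differ by $o(1)$, so $a_1^{(n)}$ must keep a constant sign for all large $n$ and hence converges to $+c$ or $-c$. In all cases $a_1^{(n)}$ converges a.s.\ to a limit $a_1$, as claimed. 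The main obstacle is precisely that the drift coefficient $1+\gamma_{n+1}\delta_n$ exceeds $1$ and we do not yet know $\mu=\lambda_1$ (so $\sum_n\gamma_{n+1}\delta_n$ might diverge), which blocks the telescoping argument of Lemma~\ref{conv_mu_Vn} and forces the detour through $(a_1^{(n)})^2$; the two points requiring genuine care are the localization needed for the martingale transform $\sum_n\gamma_{n+1}M_na_1^{(n)}$ (only an almost-sure, not deterministic, bound on $\|V_n\|$ is at hand) and the last step upgrading convergence of $(a_1^{(n)})^2$ to convergence of $a_1^{(n)}$.
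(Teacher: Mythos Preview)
Your proof is correct and takes a genuinely different route from the paper. Both arguments start from the same one--step recursion
\[
a_1^{(n+1)}=\bigl(1+\gamma_{n+1}(\mu(V_n)-\lambda_1)\bigr)a_1^{(n)}+\gamma_{n+1}Z_n',
\]
but the paper then reruns the $\liminf/\limsup$ contradiction scheme of Lemma~\ref{conv_mu_Vn}: it picks $a<b$ between the two extremes, telescopes over a crossing interval, and argues that the telescoped right--hand side is controlled by the tail of the martingale $\sum_j\gamma_jZ_j'$. You instead observe at the outset that the multiplicative factor exceeds $1$, pass to the nonnegative process $(a_1^{(n)})^2$, and show that after removing a square--summable remainder and an a.s.\ convergent martingale transform the leftover drift $\sum_n\gamma_{n+1}\delta_n(a_1^{(n)})^2$ is nonnegative and bounded, hence convergent; a short increment argument then fixes the sign. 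The trade--offs: the paper's approach is shorter and reuses the template of Lemma~\ref{conv_mu_Vn}, but its key inequality~\eqref{aidif} is delicate because the products $\prod_i(1+\gamma_{i+1}(\mu(V_i)-\lambda_1))\ge 1$ multiply the sign--indefinite noise $Z_j'$, so pulling the products out requires care about the sign of $a_1^{(n)}$ on the crossing interval. Your squaring trick sidesteps this entirely by making the drift term manifestly nonnegative, at the modest cost of the final step upgrading convergence of $(a_1^{(n)})^2$ to convergence of $a_1^{(n)}$; your localization for the martingale transform is the right idea, though it is cleanest phrased with the stopping time $\tau_K=\inf\{n:\|V_n\|>K\}$ rather than the tail event $\{\sup_n\|V_n\|\le K\}$.
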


\begin{proof}
Since $V_{n+1} = V_n - \gamma_{n+1}\xi_{n+1}$, $\xi_{n+1} = A_{n+1}V_n-\frac{<A_{n+1}V_n,V_n>}{\|V_n\|^2}V_n$, by definition of $a_1^{(n)} = <V_n,\theta_1>$ and $\mu (V_n) = \frac{<\Sigma V_n,V_n>}{\|V_n\|^2}$, also by the nature: $<\Sigma V_n,\theta_1> = <V_n,\Sigma \theta_1> = <V_n,\lambda_1\theta_1> = \lambda_1a_1^{(n)}$, we have:
\begin{eqnarray*}
a_1^{(n+1)}&=&<V_{n+1},\theta_1>\ =\ <V_n - \gamma_{n+1}\xi_{n+1},\theta_1>\\
& = & <V_n,\theta_1> - \gamma_{n+1} <A_{n+1}V_n-\frac{<A_{n+1}V_n,V_n>}{\|V_n\|^2}V_n,\theta_1>\\
& = & a_1^{(n)} + \gamma_{n+1} <\frac{<\Sigma V_n,V_n>}{\|V_n\|^2}V_n + \frac{<(A_{n+1}-\Sigma)V_n,V_n>}{\|V_n\|^2}V_n-\Sigma V_n\\
& & +(\Sigma-A_{n+1})V_n,\theta_1>\\
& = & a_1^{(n)} + \gamma_{n+1}(\mu(V_n)-\lambda_1)a_1^{(n)} + \gamma_{n+1} Z_n'\\
& = & a_1^{(n)}(1 + \gamma_{n+1}(\mu(V_n)-\lambda_1)) + \gamma_{n+1} Z_n',
\end{eqnarray*}
where $Z_n' = <(\Sigma-A_{n+1})V_n,\theta_1> + \frac{<(A_{n+1}-\Sigma)V_n,V_n>}{\|V_n\|^2}a_1^{(n)}$.

Since $E[\|V_n\|^2] = E[\|V_{n-1}\|^2]+\gamma_n^2E[\|\xi_n\|^2]\leq E[\|V_{n_1}\|^2]+\gamma_n^2 \|\Sigma\|^2 E\|V_{n-1}\|^2\leq\prod_{n=1}^{\infty}(1+\gamma_n^2\|\Sigma\|^2)\leq \infty$, $Z_n'$ is centered and $E[Z_n'^2]$ is bounded, by lemma \ref{basic_prob_thm}, $\sum_{n=1}^\infty\gamma_nZ_n'<\infty.$

Now, if $\liminf a_1^{(n)} < \limsup a_1^{(n)}$, choose $a,b$ such that $\liminf a_1^{(n)} < a<b < \limsup a_1^{(n)}$, find $m_1,n_1$, such that: $m_1\geq n_1\geq N$, $a_1^{(m_1)}<a$, $a_1^{(n_1)}>b$, for $j\in (n_1,m_1)$, $a\leq a_1^{(j)}\leq b$. Since $\lambda_1$ is the smallest eigenvalue, $\mu(V_k)\geq\lambda_1$.

Thus: $$a_1^{(m_1)}-a_1^{(n_1)}\prod_{k=n_1}^{m_1} (1+\gamma_{k+1}(\mu(V_k)-\lambda_1))\leq a_1^{(m_1)}-a_1^{(n_1)} < a-b\leq0.$$

On the other hand:

\begin{eqnarray}\label{aidif}
& &a_1^{(m_1)}-a_1^{(n_1)}\prod_{k=n_1}^{m_1} (1+\gamma_{k+1}(\mu(V_k)-\lambda_1)) \nonumber\\
&=& \sum_{j=n_1}^{m_1-1}\gamma_j Z_j'\prod_{i=j}^{m-1}({1+\gamma_{j+1}(\mu(V_j)-\lambda_1)})\nonumber\\
&\geq& \sum_{j=n_1}^{m_1-1}\gamma_j Z_j'
\end{eqnarray}

Since $\sum_{j=1}^\infty \gamma_jZ_j'<\infty$, let $n_1\to\infty$, we can let $\sum_{j=n_1}^{m_1-1}\gamma_j Z_j'$ as closed to 0 as we want, which is a contradiction.

Thus $a_1^{(n)}\to a_1$ with probability 1.
\end{proof}

Now we get the idea that $\mu(V_n)$ and $a_1^{(n)}$ are both convergence with probability 1, and by the proof above, all coefficients in \eqref{mun} are convergence with probability 1, so does the part $\gamma_{n+1}c_nf(V_n)$. By find the convergence rate for each of these parts, we can find the convergence rate for $\mu(V_n)$.

\begin{lemma}\label{conv_rate_mu_vn}
 (1) $\mu(V_n)\to\lambda_1$ as $n\to\infty$ with probability $1$, and (2) the convergence rate of $\frac{<A_nV_n,V_n>}{\|V_n\|^2}$ to $\lambda_1$ is in the order of $O(\frac{\|\Sigma\|}{\sqrt{n}}\cdot (\sqrt{E[\|A_n\|^2]})\bigvee\|\Sigma\|)$.
\end{lemma}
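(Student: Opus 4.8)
The plan is to first establish part (1) by combining Lemma~\ref{conv_mu_Vn} and Lemma~\ref{conv_a_1} with the spectral decomposition~\eqref{AAssumptino}. Since $\mu(V_n)\to\mu$ a.s.\ and $a_1^{(n)}=\langle V_n,\theta_1\rangle\to a_1$ a.s., I would argue that the only way the iteration can stabilize is for $V_n/\|V_n\|$ to align with $\theta_1$. Concretely, write $V_n = \sum_j a_j^{(n)}\theta_j$ and run the same Cauchy-sequence argument used for $a_1^{(n)}$ on each coordinate $a_j^{(n)}$ for $j\ge 2$: the recursion there reads $a_j^{(n+1)} = a_j^{(n)}(1+\gamma_{n+1}(\mu(V_n)-\lambda_j)) + \gamma_{n+1}Z_{n,j}'$, and since $\mu(V_n)\to\mu\le\lambda_1<\lambda_j$, the deterministic multiplier is eventually $\le 1-c\gamma_{n+1}$ for some $c>0$; together with $\sum\gamma_n=\infty$ and $\sum\gamma_n Z_{n,j}'<\infty$ (from Lemma~\ref{basic_prob_thm}), this forces $a_j^{(n)}\to 0$. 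Hence $\|V_n\|^2 = \sum_j (a_j^{(n)})^2 \to a_1^2$, and $\mu(V_n) = \sum_j \lambda_j (a_j^{(n)})^2/\|V_n\|^2 \to \lambda_1$ provided $a_1\ne 0$; ruling out $a_1=0$ is a separate small point, handled by noting that if $a_1=0$ then $\mu=\lim\mu(V_n)>\lambda_1$, but feeding $\mu>\lambda_1$ back into the $a_1^{(n)}$ recursion shows $|a_1^{(n)}|$ grows, contradicting $\lim E\|V_n\|^2<\infty$. So $\mu=\lambda_1$.

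For part (2), the rate, I would return to the recursion~\eqref{mun}, namely $\mu(V_{n+1}) - c_n\mu(V_n) = -2\gamma_{n+1}c_n f(V_n) - 2a_n c_n + b_n c_n$. Writing $\delta_n := \mu(V_n)-\lambda_1\ge 0$, and using that $c_n = 1 - O(\gamma_{n+1}^2)$, $b_n = O(\gamma_{n+1}^2)$, while $f(V_n)\asymp g\,\delta_n$ near alignment (since $f(V_n) = \sum_j\lambda_j^2(a_j^{(n)})^2/\|V_n\|^2 - \mu(V_n)^2$, which to leading order is controlled by the spectral gap and is comparable to $(\lambda_2-\lambda_1)\delta_n$ when $V_n$ is close to $\theta_1$), the recursion becomes, to leading order, $\delta_{n+1} = (1-2g\gamma_{n+1})\delta_n - 2\gamma_{n+1}Z_n + (\text{lower order})$. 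With $\gamma_n = c/n$ this is a standard stochastic-approximation recursion; solving it via the product $\prod_{i}(1-2g\gamma_{i+1})\asymp (k/n)^{2gc}$ gives $\delta_n \asymp_p \sum_{k\le n} \frac{k^{2gc}}{n^{2gc}}\gamma_{k+1}|Z_k|$, and the $L^2$ size of the martingale-difference sum $\sum \gamma_{k+1} Z_k (k/n)^{2gc}\cdot(\cdots)$ is of order $\sqrt{\sum_k \gamma_k^2 E[Z_k^2](k/n)^{4gc}}$. Since $E[Z_n^2] = O(\|\Sigma\|^2 E\|A_n\|^2 \vee \|\Sigma\|^4)$ from the definition~\eqref{zn} and the boundedness of $\|V_n\|^{-2}\langle\cdot\rangle$ terms, this evaluates to the claimed order $\frac{\|\Sigma\|}{\sqrt n}(\sqrt{E\|A_n\|^2}\vee\|\Sigma\|)$ (the $\sqrt n$ comes from $\sum_{k\le n}k^{4gc-2}/n^{4gc} \asymp 1/n$ once $4gc-2>-1$, and one checks the other regime gives the same or smaller). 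The matching lower bound $\asymp_p$ rather than just $\lesssim_p$ follows because the martingale sum is genuinely of that order — its variance does not degenerate — so a CLT-type or anti-concentration argument for the leading stochastic term gives the two-sided bound.

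The main obstacle I anticipate is making the heuristic $f(V_n) \asymp g\,\delta_n$ rigorous uniformly along the trajectory. Near alignment this comparison is clean, but one must show $V_n$ actually enters and stays in a neighborhood of $\theta_1$ fast enough that the linearization is valid on the timescale governing the rate; this requires coupling the rate analysis of $\delta_n$ with a simultaneous rate for $L(V_n,\theta_1)$, i.e.\ the two bounds in Theorem~\ref{thm:main} must really be proved together. A secondary technical point is verifying that $c>0$ is chosen (or that $g c$ is large enough) so that $2gc > 1/2$, ensuring the product-weighted variance sum is dominated by its tail and yields the $n^{-1/2}$ rate rather than a slower one; if $2gc \le 1/2$ the rate would instead be governed by the transient and one would get $n^{-2gc}$, so the theorem implicitly assumes the step-size constant is in the favorable regime, which I would state explicitly. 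Everything else — the bounds on $a_n$, $b_n$, $c_n$, $E[Z_n^2]$ — is already in hand from the preceding lemmas and their proofs.
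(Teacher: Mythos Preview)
Your proposal and the paper agree on the backbone for part~(2): both return to the recursion~\eqref{mun}, separate the contributions of $a_n,b_n,c_n$ and $f(V_n)$, identify the martingale tail $\sum_{j\ge n}a_j=\sum_{j\ge n}\gamma_{j+1}Z_j$ as the term of order $n^{-1/2}$, and compute the constant by bounding $E[Z_n^2]\lesssim \|\Sigma\|^2(E\|A_n\|^2\vee\|\Sigma\|^2)$ from~\eqref{zn}. The difference is in the handling of $f(V_n)$. The paper does not linearize; it uses the eigendecomposition $f(V_n)=\sum_i (a_i^{(n)})^2(\lambda_i^2-\mu(V_n)^2)/\|V_n\|^2$ (equation~\eqref{fvneigdec}) to observe directly that $f(V_n)\to 0$ at the \emph{same} rate as $\mu(V_n)\to\lambda_1$, and then argues self-consistently that the rate must be that of the $a_j$ sum. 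Your explicit Robbins--Monro linearization $f(V_n)\asymp g\delta_n$ followed by a product-formula solution is a genuinely more careful (and more standard) route to the same conclusion, and it surfaces the step-size constraint $2gc>1/2$ which the paper does not discuss.

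For part~(1), your primary plan has a slip: $\mu(V_n)=\langle\Sigma V_n,V_n\rangle/\|V_n\|^2\ge\lambda_1$ always, so you cannot assert $\mu\le\lambda_1$, and without that you do not yet know $\mu<\lambda_2$, so the multiplier in the $a_j^{(n)}$ recursion for $j\ge 2$ is not a priori contractive --- that is the very thing to be proved. The paper avoids this by working only with the $a_1^{(n)}$ recursion: since $\mu(V_n)-\lambda_1\ge 0$, the product $\prod_k(1+\gamma_{k+1}(\mu(V_k)-\lambda_1))$ can only be bounded (which it must be, by Lemma~\ref{conv_a_1}) if $\sum_k\gamma_{k+1}(\mu(V_k)-\lambda_1)<\infty$, forcing $\mu(V_n)\to\lambda_1$ because $\sum\gamma_n=\infty$. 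Your fallback argument (``if $\mu>\lambda_1$ then $|a_1^{(n)}|$ grows, contradicting $E\|V_n\|^2<\infty$'') is essentially this same idea, so you already have the right fix in hand --- just promote it to the main argument and drop the $j\ge 2$ coordinate analysis.
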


\begin{proof}
(1) 
\begin{eqnarray*}
a_1^{(n+1)} & = & <V_{n+1},\theta_1>\ =\ <V_{n+1},\theta_1>\ =\ <V_n-\gamma_{n+1}\xi_{n+1},\theta_1>\\
& = &<V_n,\theta_1> - \gamma_{n+1}<A_{n+1}V_n - \frac{<A_{n+1}V_n,V_n>}{\|V_n\|^2},\theta_1>\\
& = & a_1^{(n)} + \gamma_{n+1}\frac{<A_{n+1}V_n,V_n>}{\|V_n\|^2}a_1^{(n)} - \gamma_{n+1}<A_{n+1}V_n,\theta_1>\\
& = & a_1^{(n)} + \gamma_{n+1}\frac{<\Sigma V_{n},V_n>}{\|V_n\|^2}a_1^{(n)} - \gamma_{n+1} <V_n,\Sigma\theta_1>\\
&   & + \gamma_{n+1}\frac{<A_{n+1}V_n,V_n>}{\|V_n\|^2}a_1^{(n)} - \gamma_{n+1}<A_{n+1}V_n,\theta_1>\\
&   & - \gamma_{n+1}\frac{<\Sigma V_n,V_n>}{\|V_n\|^2}a_1^{(n)} + \gamma_{n+1} <V_n,\Sigma \theta_1>\\
& = & a_1^{(n)} (1 + \gamma_{n+1}(\mu(V_n)-\lambda_1)) + \gamma_{n+1}Z_n',
\end{eqnarray*}

where $Z_n' = <(\Sigma-A_{n+1})V_n,\theta_1>+\frac{<(A_{n+1}-\Sigma)V_n,V_n>}{\|V_n\|^2}a_1^{(n)}$, which is centered and bounded, then by Jensen's inequality:
\begin{eqnarray*}
E|a_1^{(n+1)}| & \geq & E|a_1^{(n)}|(1+\gamma_{n+1}(\frac{E[\mu(V_n)|a_1^{(n)}|]}{E|a_1^{(n)}|}-\lambda_1))\\
& \geq & \prod_{k=1}^n (1+\gamma_{k+1}(\frac{E[\mu(V_k)|a_1^{(k)}|]}{E|a_1^{(k)}|}-\lambda_1)) E|a_1^{(1)}|
\end{eqnarray*}
By Lemma \ref{conv_a_1}, $\{a_1^{(n)}\}$ convergence, then $$ \prod_{k=1}^\infty (1+\gamma_{k+1}(\frac{E[\mu(V_k)|a_1^{(k)}|]}{E|a_1^{(k)}|}-\lambda_1))<\infty,$$ thus: $$\sum_{k=1}^\infty \gamma_{k+1}(\frac{E[\mu(V_k)|a_1^{(k)}|]}{E|a_1^{(k)}|}-\lambda_1)<\infty.$$ By \eqref{GammaAssumption}, $\lim_{k\to\infty}\frac{E[\mu(V_k)|a_1^{(k)}|]}{E|a_1^{(k)}|}-\lambda_1 = 0$.

By dominant convergence theorem: $\lim_{k\to\infty} a_1^{(k)} = a_1$, $\lim_{k\to\infty} \mu(V_k) = \mu$. Thus: $\frac{\mu a_1}{a_1} = \lambda_1$, therefore, $\mu = \lambda_1$.

(2)
\begin{eqnarray*}
\lambda_1-\frac{<A_nV_n,V_n>}{\|V_n\|^2} &=& (\lambda_1 - \mu(V_n)) + (\mu(V_n)-\frac{<A_nV_n,V_n>}{\|V_n\|^2})\\
& = & (\lambda_1 - \mu(V_n)) + (\frac{<(\Sigma-A_n)V_n,V_n>}{\|V_n\|^2})
\end{eqnarray*}

Since $E[\frac{<(\Sigma-A_n)V_n,V_n>}{\|V_n\|^2}]=0$, we only need to consider $|\lambda_1-\mu(V_n)|$. From \eqref{mun} we have: $$\mu(V_{n+1}) - c_n\cdot \mu(V_n) = -2\gamma_{n+1}c_nf(V_n)-2a_nc_n + b_nc_n = (-2\gamma_{n+1}f(V_n)-2a_n + b_n)c_n,$$ where $a_j$, $b_j$ and $c_j$ are defined the same as $\eqref{abc}$. The same way as we get \eqref{muiter}, keep increase $V_{n+1}$ to $V_m$ recursively, we have:
$$\mu(V_{m}) - \mu(V_n)\prod_{i=n}^{m-1}c_i = \sum_{j=n}^{m-1}(b_j-2\gamma_{j+1}f(V_j)-2a_j)\prod_{i=j}^{m-1}c_i.$$
Now, by \eqref{xi_bound}: $E[\|\xi_n\||\mathcal{F}_{n-1}]\leq tr(\Sigma)\|V_n\|.$

For $b_j$ part, 
\begin{eqnarray*}
\sum_{j = n}^\infty E[b_j|\mathcal{F}_j] &=& \sum_{j = n}^\infty \gamma_{j+1}^2 E[\frac{<\Sigma\xi_{j+1},\xi_{j+1}>}{\|V_j\|^2}|\mathcal{F}_j]\leq\sum_{j = n}^\infty \gamma_{j+1}^2 \frac{\|\Sigma\|E[\|\xi_{j+1}\|^2|\mathcal{F}_j]}{\|V_j\|^2}\\
&\leq& \sum_{j = n}^\infty \gamma_{j+1}^2 \frac{\|\Sigma\|tr(\Sigma)^2\|V_j\|^2}{\|V_j\|^2}=\sum_{j = n}^\infty \gamma_{j+1}^2\cdot c,
\end{eqnarray*}
thus its rate of convergence is $O(\frac{1}{n})$

For $a_j$ part, $\sum_{j=n}^\infty a_j = \sum_{j=n}^\infty \gamma_{j+1}Z_j$, $Z_j$ is centered and $E[Z_j^2]$ is bounded, by lemma \ref{basic_prob_thm}, $E[|S-S_n|^2]\leq \sum_{i>n} E[a_i^2]$, whose rate of convergence is $O(\frac{1}{n})$, thus $\sum_{j=n}^\infty a_j$ has the rate of convergence $O(\frac{1}{\sqrt{n}})$.

For $c_j$ part, by proof of the lemma \ref{conv_mu_Vn}, $\prod_{i=n}^\infty c_i$ has the same convergence properties as  $\sum_{i=n}^\infty \gamma_{i+1}^2\frac{\|\xi_{i+1}\|^2}{\|V_i\|^2}$. By \eqref{xi_bound}:
$$E[\frac{\|\xi_{i+1}\|^2}{\|V_i\|^2}|\mathcal{F}_i]\leq E[\frac{tr(\Sigma)^2\|V_i\|^2}{\|V_i\|^2}] = tr(\Sigma)^2,$$ thus $\prod_{i=n}^\infty c_i$ has the rate of convergence $O(\frac{1}{n})$.

For $f(V_j)$ part, by assumption 2, rewrite $V_n = \sum_{i=1}^d a_i^{(n)}\theta_i$, where $d$ is the dimension. From $\eqref{muiter}$, we have:
$\sum_{n=1}^\infty \gamma_{n+1}f(V_n)\prod_{k=1}^{n-1}(1+\gamma_{k+1}^2\frac{\|\xi_{k+1}\|^2}{\|V_k\|^2})^{-1}<\infty$ with probability $1$.
Since we have $\gamma_n\asymp_p \frac{1}{n}$ and $f(V_n)\geq0$ $\forall n$, if $\liminf_{n\to\infty}f(V_n) = c$, then $\sum_{n=1}^\infty\gamma_{n+1}f(V_n)\prod_{k=1}^{n-1}(1+\gamma_{k+1}^2\frac{\|\xi_{k+1}\|^2}{\|V_k\|^2})^{-1}=\infty$, thus $c=0$.

Now, by nature of eigenvector and eigenvalue, as well as assumption 2: $\theta_i^2 = 1$, $\theta_i\theta_j = 0$ for $i\neq j$, and $\|V_n\|^2 = \sum_{i=1}^d (a_i^{(n)})^2$.

Thus:
\begin{eqnarray}\label{fvneigdec}
f(V_n) & = & \frac{\|\Sigma V_n\|^2}{\|V_n\|^2} - \frac{<\Sigma V_n,V_n>^2}{\|V_n\|^4}\nonumber\\
& = & \frac{(\sum_{i=1}^d a_i^{(n)}\lambda_i\theta_i)^2}{\|V_n\|^2}-\mu(V_n)^2\nonumber\\
& = & \frac{\sum_{i=1}^d (a_i^{(n)})^2(\lambda_i^2-\mu(V_n)^2)}{\|V_n\|^2},
\end{eqnarray}
which leads to the result: $f(V_j)\to 0$ with the same rate of $\mu(V_n)\to\lambda_1.$

Thus, $\frac{<A_nV_n,V_n>}{\|V_n\|^2}$ converges to $\lambda_1$ the same rate as $a_j$ part, has the rate of convergence $O(\frac{1}{\sqrt{n}})$. More precisely, by proof of the Lemma \ref{basic_prob_thm}, $E[|S_{n+r}-S_n|^2] \leq \sum_{i>n}E[X_i^2]$ if $\{X_n\}_n$ is $0$ mean. Then for $a_j = \gamma_{j+1}Z_j$, we have $$E[|S-S_n|^2]\leq \sum_{i>n} E[a_i^2]\lesssim_p \sum_{i>n} \frac{1}{i^2}E[Z_i^2].$$ 

Now for $Z_n$, by $\eqref{zn}$, we have:
\begin{eqnarray*}
\|Z_n\| &=& \|\frac{<(A_{n+1}-\Sigma)V_n,\Sigma V_n>}{\|V_n\|^2}-\frac{<(A_{n+1}-\Sigma)V_n,V_n>}{\|V_n\|^4}\cdot<\Sigma V_n,V_n>\|\\
&\leq& \|\frac{<(A_{n+1}-\Sigma)V_n,\Sigma V_n>}{\|V_n\|^2}\|+\|\frac{<(A_{n+1}-\Sigma)V_n,V_n>}{\|V_n\|^4}\cdot<\Sigma V_n,V_n>\|\\
&\leq& \|\frac{<(A_{n+1}-\Sigma)V_n,V_n>}{\|V_n\|^2}\|\cdot \|\Sigma\|+\|\frac{<(A_{n+1}-\Sigma)V_n,V_n>}{\|V_n\|^2}\\
& &\cdot\frac{<\Sigma V_n,V_n>}{\|V_n\|^2}\|\\
&\lesssim_p& \|\frac{<(A_{n+1}-\Sigma)V_n,V_n>}{\|V_n\|^2}\|\cdot \|\Sigma\|\\
&\leq& \|A_{n+1}-\Sigma\|\|\Sigma\|\\
&\leq& (\|A_{n+1}\| + \|\Sigma\|)\|\Sigma\|.
\end{eqnarray*}

Thus:
\begin{eqnarray*}
E[Z_n^2] &\leq& \|\Sigma\|^2E[\|A_{n+1}\|^2 + \|\Sigma\|^2 + 2\|A_{n+1}\|\|\Sigma\|]\\
&\lesssim_p& \|\Sigma\|^2 E[\|A_{n+1}\|^2 + \|\Sigma\|^2]\\
&\asymp_p& \|\Sigma\|^2\cdot (E[\|A_n\|^2]\bigvee \|\Sigma\|^2).
\end{eqnarray*}

So $E[|S-S_n|^2]$ has rate of convergence $O(\frac{1}{n}\cdot\|\Sigma\|^2\cdot (E[\|A_n\|^2]\bigvee \|\Sigma\|^2))$, thus $\sum_{j=n}^\infty a_j$ has rate of convergence $O(\frac{\|\Sigma\|}{\sqrt{n}}\cdot (\sqrt{E[\|A_n\|^2]}\bigvee\|\Sigma\|))$.

\end{proof}

\begin{lemma}

(1) ${V}_n\to a_1^{(n)}\theta_1$ with probability $1$ and (2) $\frac{<V_n,\theta_1>^2}{\|V_n\|^2}$ approach to $1$ in the order of $ \frac{d\|\Sigma\|}{g\sqrt{n}}\cdot (\sqrt{E[\|A_n\|^2]}\bigvee\|\Sigma\|)$ with probability $1$.
\end{lemma}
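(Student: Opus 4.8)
The plan is to read the alignment loss in the eigenbasis and reduce both claims to the behaviour of $\mu(V_n)$ already established in Lemma \ref{conv_rate_mu_vn}. Write $V_n = \sum_{i=1}^d a_i^{(n)}\theta_i$ with $a_i^{(n)}=<V_n,\theta_i>$, so $\|V_n\|^2=\sum_{i=1}^d(a_i^{(n)})^2$, and set $\ell_n:=1-\frac{<V_n,\theta_1>^2}{\|V_n\|^2}=\frac{\sum_{i\geq 2}(a_i^{(n)})^2}{\|V_n\|^2}$. The key identity, obtained by expanding $\Sigma$ in its eigenbasis, is
\begin{equation*}
\mu(V_n)-\lambda_1=\frac{<(\Sigma-\lambda_1 I)V_n,V_n>}{\|V_n\|^2}=\sum_{i\geq 2}\frac{(a_i^{(n)})^2(\lambda_i-\lambda_1)}{\|V_n\|^2},
\end{equation*}
and since $g\leq \lambda_i-\lambda_1\leq \lambda_d-\lambda_1\leq \|\Sigma\|$ for every $i\geq 2$, it gives the two-sided bound
\begin{equation*}
g\,\ell_n\ \leq\ \mu(V_n)-\lambda_1\ \leq\ \|\Sigma\|\,\ell_n .
\end{equation*}
Thus $\ell_n$ and $\mu(V_n)-\lambda_1$ agree up to the fixed, $n$-independent factors $g$ and $\|\Sigma\|$, and everything now follows by quoting what is known about $\mu(V_n)$.

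For part (1): Lemma \ref{conv_rate_mu_vn}(1) gives $\mu(V_n)\to\lambda_1$ a.s., so the left inequality forces $\ell_n\to 0$ a.s. Moreover $\|V_n\|^2=\|V_{n-1}\|^2+\gamma_n^2\|\xi_n\|^2$ is non-decreasing in $n$ and has expectation bounded uniformly in $n$ (established above), hence converges a.s. to a finite limit; in particular $\|V_n\|^2$ stays bounded. Therefore $\|V_n-a_1^{(n)}\theta_1\|^2=\sum_{i\geq 2}(a_i^{(n)})^2=\ell_n\|V_n\|^2\to 0$ a.s., which is exactly $V_n\to a_1^{(n)}\theta_1$.

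For part (2): By Lemma \ref{conv_rate_mu_vn}(2), $\mu(V_n)-\lambda_1$ (equivalently $f(V_n)$, cf. \eqref{fvneigdec}) vanishes at the order $\frac{\|\Sigma\|}{\sqrt n}\big(\sqrt{E\|A_n\|^2}\bigvee\|\Sigma\|\big)$, the dominant contribution coming from the martingale series $\sum_{j\geq n}\gamma_{j+1}Z_j$ controlled via Lemma \ref{basic_prob_thm}; dividing the two-sided bound by $g$ then yields $\ell_n$ of this order up to a constant, which absorbs into the stated rate. An equivalent route, which makes the factor $d$ explicit, is to run for each $i\geq 2$ the recursion obtained exactly as for $a_1^{(n)}$ in Lemma \ref{conv_a_1},
\begin{equation*}
a_i^{(n+1)}=a_i^{(n)}\big(1+\gamma_{n+1}(\mu(V_n)-\lambda_i)\big)+\gamma_{n+1}Z_{n,i}',\qquad Z_{n,i}'=<(\Sigma-A_{n+1})V_n,\theta_i>+\frac{<(A_{n+1}-\Sigma)V_n,V_n>}{\|V_n\|^2}a_i^{(n)};
\end{equation*}
here $Z_{n,i}'$ is centered with $E[(Z_{n,i}')^2]\lesssim \|\Sigma\|^2\big(E\|A_n\|^2\bigvee\|\Sigma\|^2\big)$ (the same estimate as for $Z_n$), and since $\mu(V_n)\to\lambda_1<\lambda_i$ the multiplier $1+\gamma_{n+1}(\mu(V_n)-\lambda_i)$ is eventually $<1$, so applying Lemma \ref{basic_prob_thm} to $\sum \gamma_{j+1}Z_{j,i}'$ gives $|a_i^{(n)}|\lesssim_p\frac{\|\Sigma\|}{g\sqrt n}\big(\sqrt{E\|A_n\|^2}\bigvee\|\Sigma\|\big)$; summing the $d-1$ coordinates $i\geq 2$ in $\ell_n=\sum_{i\geq2}(a_i^{(n)})^2/\|V_n\|^2$ introduces the factor $d$.

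The main obstacle does not sit in this lemma itself but in the rate it borrows: $\ell_n$, $\mu(V_n)-\lambda_1$ and $f(V_n)$ are mutually entangled through \eqref{fvneigdec} and the recursion \eqref{mun}, so one has to argue that the $O(n^{-1/2})$ martingale contribution $\sum\gamma_{j+1}Z_j$ dominates the $\sum\gamma_{j+1}f(V_j)$ term and thereby closes the circular dependence — this is precisely the content of Lemma \ref{conv_rate_mu_vn}. Granting that, the only remaining care points are bookkeeping: the gap $g$ and the spread $\lambda_i-\lambda_1\leq\|\Sigma\|$ are $n$-independent, so passing through the two-sided bound changes only constants and not the $n^{-1/2}$ order; and one needs $\|V_n\|$ bounded away from $0$ (true whenever $V_1\neq 0$, by monotonicity of $\|V_n\|^2$) so that the loss $\ell_n$ and the Euclidean error $\|V_n-a_1^{(n)}\theta_1\|$ are comparable.
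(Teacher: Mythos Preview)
Your main argument is correct and is in fact cleaner than the paper's. The paper reaches the same conclusion by going through $f(V_n)$: it writes
\[
f(V_n)=\frac{(a_1^{(n)})^2(\lambda_1^2-\mu(V_n)^2)}{\|V_n\|^2}+\frac{\sum_{i\ge 2}(a_i^{(n)})^2(\lambda_i^2-\mu(V_n)^2)}{\|V_n\|^2},
\]
uses that $f(V_n)$ and $\mu(V_n)-\lambda_1$ vanish at the same rate (Lemma~\ref{conv_rate_mu_vn}), and then isolates the $i\ge 2$ block, dividing by $\lambda_i^2-\lambda_1^2\ge g(\lambda_i+\lambda_1)$ to extract $\sum_{i\ge 2}(a_i^{(n)})^2$. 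Your Rayleigh-quotient identity $\mu(V_n)-\lambda_1=\sum_{i\ge 2}(a_i^{(n)})^2(\lambda_i-\lambda_1)/\|V_n\|^2$ bypasses $f(V_n)$ entirely: the $i=1$ term disappears automatically, the weights $\lambda_i-\lambda_1$ are $n$-independent constants, and the two-sided sandwich $g\,\ell_n\le \mu(V_n)-\lambda_1\le \|\Sigma\|\,\ell_n$ falls out immediately. What the paper's route buys is consistency with its earlier machinery (everything is phrased through $f(V_n)$ and \eqref{fvneigdec}); what yours buys is a one-line transfer of the rate with no need to juggle the sign of the $i=1$ term or the $\mu(V_n)$-dependent weights $\lambda_i^2-\mu(V_n)^2$.

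Two minor remarks. First, your alternative coordinate-wise route is heuristically right but the conclusion $|a_i^{(n)}|\lesssim_p n^{-1/2}$ is stronger than what Lemma~\ref{basic_prob_thm} alone delivers from the recursion; making it rigorous would need a Robbins--Monro type rate argument for the contracting multiplier, so it is best kept as motivation (and note that squaring would then give $\ell_n$ of order $n^{-1}$, not $n^{-1/2}$, which is sharper than the stated lemma --- the factor $d$ in the statement does not appear in the paper's own bound either and is likely a typo, cf.\ Theorem~\ref{thm:main}). Second, your justification that $\|V_n\|^2$ converges a.s.\ is fine; the paper does not invoke this explicitly for part~(1) but it is implicit in its claim that $a_i^{(n)}\to 0$ for $i\ge 2$.
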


\begin{proof}

(1) We already proved that $f(V_n)\to 0$ and $\mu(V_n)\to\lambda_1$ in lemma \ref{conv_rate_mu_vn}, thus $\lambda_i-\mu(V_n)>0$ for $i\neq 1$ when $n$ large enough. By \eqref{fvneigdec}, $0 = \lim_{n\to\infty}f(V_n) = \lim_{n\to\infty}\frac{\sum_{i=1}^d (a_i^{(n)})^2(\lambda_i^2-\mu(V_n)^2)}{\|V_n\|^2}$, $a_i^{(n)} = 0$ when $i\neq 1$, thus ${V}_n\to a_1^{(n)}\theta_1$ with probability $1$.

(2) By previous argument, we have: 
\begin{eqnarray*}
f(V_n) &=& \frac{\sum_{i=1}^d (a_i^{(n)})^2(\lambda_i^2-\mu(V_n)^2)}{\|V_n\|^2}\\
&=& \frac{(a_1^{(n)})^2(\lambda_1^2-\mu(V_n)^2)}{\|V_n\|^2} + \frac{\sum_{i=2}^d (a_i^{(n)})^2(\lambda_i^2-\mu(V_n)^2)}{\|V_n\|^2},
\end{eqnarray*}
convergence with the same rate of $\mu(V_n)\to\lambda_1,$ we have $\frac{\sum_{i = 2}^\infty (a_i^{(n)})^2(\lambda_i^2-\mu(V_n)^2)}{\|V_n\|^2}\to 0$ at least the same rate as $\frac{(a_1^{(n)})^2(\lambda_1^2-\mu(V_n)^2)}{\|V_n\|^2}\to 0$.

By part (1), $\mu(V_n)$ has rate of convergence $O(\frac{\|\Sigma\|}{\sqrt{n}}\cdot (\sqrt{E[\|A_n\|^2]})\bigvee\|\Sigma\|)$, we have 
$$\frac{\sum_{i = 2}^\infty (a_i^{(n)})^2(\lambda_i^2-\lambda_1^2)}{\|V_n\|^2} \asymp_p \frac{\|\Sigma\|}{\sqrt{n}}\cdot (\sqrt{E[\|A_n\|^2]}\bigvee\|\Sigma\|)\cdot\frac{(a_1^{(n)})^2 \lambda_1}{\|V_n\|^2},$$
let $g = |\lambda_1-\lambda_2|$, thus:
\begin{eqnarray*}
\sum_{i = 2}^\infty(a_i^{(n)})^2 &\asymp_p& \frac{\|\Sigma\|}{\sqrt{n}}\cdot (\sqrt{E[\|A_n\|^2]}\bigvee\|\Sigma\|)\cdot\frac{(a_1^{(n)})^2 \lambda_1}{|(\lambda_i-\lambda_1)(\lambda_i+\lambda_1)|}\\
&\lesssim_p& \frac{\|\Sigma\|\|V_n\|^2}{g\sqrt{n}}\cdot (\sqrt{E[\|A_n\|^2]}\bigvee\|\Sigma\|)
\end{eqnarray*}

Now by assumption 2, $\|V_n\|^2 = \sum_{i=1}^d (a_i^{(n)})^2$, thus:
$$\|V_n\|^2 - (a_1^{(n)})^2 = \sum_{i = 2}^\infty(a_i^{(n)})^2 \lesssim_p \frac{\|\Sigma\|\|V_n\|^2}{g\sqrt{n}}\cdot (\sqrt{E[\|A_n\|^2]}\bigvee\|\Sigma\|).$$ Above all:
$$1 - \frac{<V_n,\theta_1>^2}{\|V_n\|^2} \lesssim_p \frac{\|\Sigma\|}{g\sqrt{n}}\cdot (\sqrt{E[\|A_n\|^2]}\bigvee\|\Sigma\|).$$

\end{proof}

\section{Experiment}
The dataset $X \in \mathbb{R}^{10^6\times10}$ was just generated through its singular value decomposition. Specifically, we fix a $10\times10$ diagonal matrix $\Sigma = diag\{1, 0.9, \cdots, 0.9\}$ and generate random orthogonal projection matrix $U \in \mathbb{R}^{10^6\times10}$ and random orthogonal matrix $V \in \mathbb{R}^{10\times10}$. And the dataset $X = \sqrt{n}U\Sigma V^T$, which guarantees that the matrix $A = \frac{1}{n}X^TX$ has eigen-gap 0.1. See Figure.1.

\begin{figure}[!hbtp]
\begin{center}
  \includegraphics[width=0.72\textwidth]{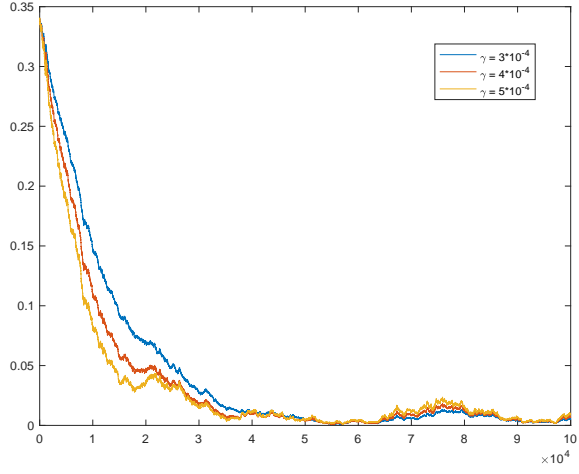}
  \caption{Convergence of Krasulina Scheme}
  \label{difference}
 \end{center}
\end{figure}

\section{Conclusion}

We derived the asymptotic rate of convergence for the estimation of the smallest eigenvalue and corresponding eigenvector of the Krasulina scheme. There are several important questions related to Online PCA.

\begin{enumerate}

\item  The Krasulina scheme only requires $O(d)$ storage space complexity against $O(d^2)$ for standard PCA in the offline setting, however, we paid a price in the rate of convergence that is significantly slower than offline setting. See Table.\ref{tab_schemes}.

\begin{table}
\newcommand{\tabincell}[2]{\begin{tabular}{@{}#1@{}}#2\end{tabular}}
\centering
\caption{Comparison of different schemes with Gaussian assumption. The convergence rates are given for the operator norm. For the sparse PCA scheme of \cite{cai_sparse_2013}, $k_q^*$ denotes the sparsity level of the eigenvectors.} \label{tab_schemes}
\begin{tabular}{|c|c|c|c|}
\hline
Scheme&\tabincell{c}{Space\\ complexity}&\tabincell{c}{Convergence rate}&Setting\\
\hline
Standard PCA&$O(nd^2$)&$O(\|\Sigma\|\cdot (\sqrt{\frac{r(\Sigma)}{n}}\bigvee\frac{r(\Sigma)}{n}))$&Offline\\
\hline
Sparse PCA \cite{cai_sparse_2013}&$O(nd^2$)&$O(\frac{k_q^*}{n\lambda}(d+\log{\frac{d}{k_q^*}}))$&Offline\\
\hline
Krasulina&$O(d)$&$O(\frac{\|\Sigma\|tr(\Sigma)}{\sqrt{n}})$ & Online\\ 
\hline
\end{tabular}
\end{table}

An interesting question would be whether the Krasulina scheme can achieve the offline rate of convergence. 

The simulation study seems to confirm the slow convergence rate of Krasulina's scheme. It would be interesting to build an acceleration for this scheme. This problem has been investigated by \cite{de_sa_accelerated_2017} where negative numerical results were provided for usual acceleration schemes. Therefore this question remains largely open.

\item Note that the proof argument in the original paper \cite{krasulina_method_1969} only gives the consistency of the smallest eigenvalue and corresponding eigenvector for the Krasulina scheme. As we built upon this argument in this paper, we only provide the rate of convergence for the smallest eigenvalue and corresponding eigenvector. We can extend the result to the top eigenvalue and cooresponding eigenvector by \eqref{top_eig}, however, tackling other eigenvalues will require a new argument.

\item The convergence rates of the Krasulina estimator depends on the multiplier $c$ in the learning rate $\gamma_n$ when we take $\gamma_n = \frac{c}{n}$. If it is too low, the rate of convergence will be slower than $O(\frac{1}{\sqrt{n}})$, if it is too high, the constant in the rate of convergence will be large. Is there a simple and practical way to choose $c$?

 \item Finally, it would be of interest to derive rates of convergence for other online PCA schemes including Oja and naive PCA.

\end{enumerate}

\nocite{*}


\end{document}